\newtheorem{proposition}{Proposition}[section] 
\newtheorem{theorem}[proposition]{Theorem}
\newtheorem{lemma}[proposition]{Lemma}
\newtheorem{corollary}[proposition]{Corollary}
\newtheorem{remark}[proposition]{Remark}
\newcommand{\N}{\ensuremath{{\mathbb N}}}
\newcommand{\R}{\ensuremath{{\mathbb R}}}
\DeclareMathOperator*{\argmax}{argmax}
\begin{document}

\title{Classification error in multiclass discrimination from Markov data}


\author{
Sören Christensen\thanks{Department of Mathematical Sciences, Chalmers University of Technology and G\"oteborg
University, SE-412 96 Göteborg, Sweden; E-mail: sorenc@chalmers.se}
, Albrecht Irle\thanks{Mathematical Institute, University of Kiel, D-24098 Kiel, Germany
E-mail: \textit{lastname}@math.uni-kiel.de}
, and Lars Willert\footnotemark[2]} 
 \maketitle
\begin{abstract}
As a  model for an on-line classification setting we consider
 a stochastic process  $(X_{-n},Y_{-n})_{n}$,
 the present time-point being denoted by 0, with observables
$ \ldots,X_{-n},X_{-n+1},\ldots, X_{-1}, X_0$
from which the pattern $Y_0$ is to be inferred.
So in this classification setting, in addition to the present
observation $X_0$ a number $l$ of preceding observations may be used
for classification, thus taking a possible dependence structure into account
as it occurs e.g. in an ongoing classification of handwritten characters. We treat the question how the performance of classifiers is improved by using such additional information. For our  analysis, a
 hidden Markov model is used. Letting $R_l$ denote the minimal risk of misclassification using $l$ preceding
observations we show that the difference $\sup_k |R_l - R_{l+k}|$ decreases
exponentially fast as $l$ increases. This suggests that a small $l$ might
already lead to a noticeable improvement. To follow this point we look at the use of
past observations for kernel classification rules.

Our practical findings  in simulated hidden Markov models and in the classification of handwritten characters indicate that using $l=1$, i.e.
just  the last preceding observation in addition to $X_0$, can  lead to a
substantial reduction of the risk of misclassification. So, in the presence of stochastic dependencies, we
advocate to use
$ X_{-1},X_0$ for finding the pattern $Y_0$ instead of only $X_0$ as one would in the independent situation.

\end{abstract}
{\it Keywords}: optimal classification,
asymptotic risk, hidden Markov model.

\section{Introduction}

In pattern recognition, the following basic situation is considered:
A random variable $(X,Y)$ consists of an observed pattern  $X\in  {\cal X} $, typically $ {\cal X} = \R^d$,
from which we want to infer the unobservable class
$Y$ which belongs to a given finite set $M$ of classes.
Consider the case that the
 distribution $P^{(X,Y)}$ is known. Then the classification rule  which
chooses the class having maximum a posteriori probability
given the observed pattern $x$ has minimal risk of misclassification. This optimal rule
is given by
$$ x \mapsto \argmax_{y} P(Y=y|X=x)$$
where $\argmax$ takes, in a measurable way, some value $y^*$ with $P(Y=y^*|X=x)= \max_{y}P(Y=y|X=x)$.
The minimal risk of misclassification, often termed  the Bayes risk, is given by
$$R= \int \min_{y} P(Y \neq y|X=x) P^X(dx).$$
Even though in many problems of pattern recognition the distribution of $P^{(X,Y)}$ will not be known,
the Bayes risk is a quantity of major importance as it provides the benchmark behaviour against which
any other procedure is judged.

Let us briefly recall the i.i.d. model of supervised learning which has provided a main direction of research, see, e.g., the monograph \cite{gyorfi1996probabilistic}. There, in addition to $(X,Y)$, we have a learning sequence $(X_1',Y_1'),(X_2',Y_2'),...,(X_n',Y_n')$ of independent copies of $(X,Y)$, i.e. having the same distribution. This sequence is sampled independently of $(X,Y)$ and is used for learning proposes, in a statistical sense for the estimation of unknown distributions to construct the classification procedure. 

In this paper we take a different approach which is motivated by an on-line classification setting which we model in the following way:
 There is given
 a stochastic process  $(X_{-n},Y_{-n})_{n}$,
 the present time-point being denoted by 0, with observables in temporal order
$$ \ldots,X_{-n},X_{-n+1},\ldots, X_{-1}, X_0$$
from which the pattern $Y_0$ is to be inferred. The time parameter $n$ belongs to some  set of the form
$\{0,1,\ldots,m\}$ or, for  mathematical purposes, to $\N_0= \{0,1,\ldots\}$.
So in this classification setting, previous observations may be used to classify the present observation.
If $(X_0,Y_0)$ is independent of the past $(X_{-n},Y_{-n})_{n \geq 1 }$ then clearly previous observations carry no information on $Y_0$
and our optimal classification
would be given by $\argmax_{y} P(Y_0=y|X_0=x_0)$.

But in a variety of  classification problems we encounter dependence. Looking e.g. at the on-line
classification of handwritten characters  the dependence structure in natural language
could be taken into account. In this situation, $X_0$ would be the current handwritten character to be classified, $Y_0$ the unknown true character, the foregoing handwritten character would be $X_{-1}$ and the unknown true character $Y_{-1}$, and in general $X_{-n}$ would be the $n$-th one preceding $X_0$ with unknown $Y_n$. So, there is a well-known dependence between the $Y_n$'s, described by linguists using Markov models (see, e.g., \cite{DefenseAnalyses.CommunicationsResearchDivision1980} for early discussions), 
 and this dependence is of course inherited by the $X_n$'s. A popular model for this situation is given by a hidden Markov model, which we shall also use in this paper. 

Coming back to the general model, we prescribe to use the present and in addition the last $l$
preceding observables.
Then a classification rule with memory $l$  takes  the form
$$ (x_0,x_{-1},\ldots,x_{-l}) \mapsto \delta(x_0,x_{-1},\ldots,x_{-l})$$
for some measurable $\delta:{\cal X}^{l+1} \to M$. The optimal rule is given by
$$ (x_0,x_{-1},\ldots,x_{-l}) \mapsto \argmax_{y}P(Y_0=y| X_0=x_0,X_{-1}=x_{-1},\ldots,X_{-l}=x_{-l})$$
with Bayes risk
$$R_l =  \int \min_y P(Y_0 \neq y| X_0,X_{-1},\ldots,X_{-l}) dP.$$
Obviously
$$ R_0 \geq R_1 \geq \cdots  \geq R_l \geq \cdots $$
Assume that we have such a process $(X_{-n},Y_{-n})_{n \in \N_0}$  with full
past for our mathematical model.
By martingale convergence it follows that for $l \to \infty$
$$ R_l \to   R^*=\int \min_y P(Y_0 \neq y| (X_{-n})_{ n \in \N_0}) dP.$$

Here it is important to point out that this paper centers around the behaviour of the optimal classification procedure in dependence on $l$, the number of past observations used. This differs markedly from one of the main lines of research in the i.i.d. model of supervised learning where the focus is on the behaviour of classification procedures in dependence on $n$, the size of the training sequence. 

To investigate the behaviour of procedures which
 incorporate preceding  information into
classification rules  we will use the setting of hidden Markov models.
  This class of models has been of considerable  interest in the
theory and applications of pattern recognition, see
the monographs by \cite{MR1692202} and by
\cite{huang1990hidden} from a more practical viewpoint. It provides a class which allows
good modelling for various problems with dependence and still may be handled well from
the analytical, the algorithmic, and the statistical point of view, see the
monograph by \cite{cappe2005inference}. The applications range from
biology to speech recognition to finance; the above monographs contain a wealth of
such examples.

A  theoretical contribution to pattern recognition for such models was given by \cite{MR1873337}
where the asymptotic risk of misclassification for nearest neighbor rules in dependent models
including hidden
Markov models was derived. Similar models were treated in \cite{MR2274382} to obtain consistency
for certain classes of procedures, i.e. convergence of the risk of misclassification to the
Bayes risk. As consistency for classification follows from consistency in the corresponding
regression problem, see e.g. \cite[6.7]{gyorfi1996probabilistic}, any result on regression consistency yields a result on classification consistency, and a wealth of such results is available, e.g. under mixing conditions.
All these results invoke the convergence of the size $n$ of a training sequence to infinity and do
not cover the topic of this paper. Closer to our paper is the problem of predicting $Y_0$ from
$(X_0,Y_{-1},\ldots,Y_{-l})$ for stationary and ergodic time series, see e.g.
\cite[Chapter 27]{MR1920390}. Our treatment differs as we do not have
knowledge (just guesses of) $(Y_{-1},\ldots,Y_{-l})$ in on-line pattern recognition, only that of
$(X_0,X_{-1},\ldots,X_{-l})$.

The hidden Markov model as it will be used in this paper takes the following form.
We assume that for each $m$ we have, written in their temporal order,
 observables  $X_{-m},X_{-m+1},\ldots, X_{-1}, X_0$ and
unobservables $Y_{-m},Y_{-m+1},\ldots, Y_{-1}, Y_0$. The unobservables form a Markov chain.
The observables are conditionally independent given the unobservables in the form of
$$ P(X_0 \in B_0,\ldots, X_{-m} \in B_{-m}|Y_0=y_0,\ldots, Y_{-m}=y_{-m})
=Q(B_0,y_0)\cdots  Q(B_{-m},y_{-m})$$
for some stochastic kernel $Q$ and are not Markovian in general.
 This stochastic kernel and the transition matrix of the chain
are assumed to be the same for each $m$. But we allow for the flexibility that, for each $m$,
a different initial distribution, i.e. distribution of $Y_{-m}$ may occur. Note that $m$
stands for the time point in the past  where our model would be started and the distribution
of $Y_{-m}$ would not be known.

For being completely precise  we would have to use the notation
$Y^{(m)}_{-m},\ldots, Y^{(m)}_0$ since, due to our flexibility in initial distribution,
the distribution of $Y^{(m)}_{-m},\ldots, Y^{(m)}_0$ and $Y^{(m+1)}_{-m},\ldots, Y^{(m+1)}_0$
need not coincide. Hence also $R_l \geq R_{l+1}$ does not hold in general where $R_l$ is computed
in a  model started at some time $-m,m\geq l$, and $R_{l+1}$ in a  model with a possibly different
initial distribution.
But all our bounds will only involve the transition matrix and the stochastic kernel
which do not depend on the index $m$. So we
shall omit this upper index in order not to overburden our notations.

We assume that the transition matrix of the chain is such that there exists a unique stationary probability distribution $\pi$, characterized by the property that if $Y_{-m}$ has the distribution $\pi$ then all later $Y_{-m+k},~k\geq 0,$ have the same distribution $\pi$. For our chain with full past 
$(X_{-n},Y_{-n})_{n \in \N_0}$ 
we consider the stationary setting where each $Y_{-n}$ has the same distribution $\pi$.
Then
of course $R_l^* \geq R_{l+1}^*$ and $\lim_{l}R_l^* = R^*$ denoting the risk in the stationary case with
an additional $^*$.

Without loss of generality we let the probability measures $Q(\cdot,y)$ be given by densities $f_y$
with respect to some $\sigma$-finite measure $\mu$ on ${\cal X}$. So we have for all $n$
$$P(X_{-n} \in B|Y_{-n}=y) = \int_B f_y(x)\mu(dx).$$
This provides a unified treatment for  the case of discrete ${\cal X}$ where $\mu$ might  be the counting measure,
and for the case of Lebesgue densities where ${\cal X} = \R^d$ and $\mu$ might be
 $d$-dimensional Lebesgue measure.

In Section \ref{sec:exp} we shall show under a suitable assumption that
$\lim_l R_l$ exists and is independent of  the particular sequence of initial distributions, hence
$\lim_l R_l = R^*$. Furthermore this
convergence is exponentially fast
and we provide a bound for  $\sup_k |R_l - R_{l+k}|$ in this respect. Let us remark that, as we are looking backwards in time,
the usual geometric ergodicity forward in time does not seem to yield an immediate proof. In  Section \ref{sec:kernel} we introduce kernel classification
rules with memory and discuss their theoretical and practical performance.
Our findings  indicate  that
 it might be useful to include a small number $l$ of  preceding
observations, starting with $l=1$,  to increase the
performance of classification rules with  an acceptable increase in computational complexity.
Various technical proofs are given in Section \ref{sec:proof}.

\section{Exponential Convergence}\label{sec:exp}

We consider a hidden Markov model as described in the Introduction.
For this model we make the following assumption:
\\

$(A)$ All entries $p_{ij}, i,j \in M$, in the transition matrix are $ >0$. All densities are $ >0$ on ${\cal X}$.
\\

$(A)$ will be assumed to hold throughout Sections \ref{sec:exp} and \ref{sec:proof}. It implies the finiteness of the following quantities which will be used in our bounds.

\begin{remark}
Set
\[
\alpha =\max_{\iota ,\kappa ,i,j\in M}\frac{p_{i\iota }p_{\iota j}}
{p_{i\kappa }p_{\kappa j}}, \;\;
\alpha (x)=\max_{ \iota ,\kappa ,i,j\in M}\frac{p_{i\iota }p_{\iota j}\; f_{\iota }(x)}
{p_{i\kappa }p_{\kappa j}\; f_{\kappa }(x)}\mbox{ for } x \in {\cal X}.
\]
Then $\; 1 \leq \alpha, \alpha (x) < \infty$.

Furthermore, with $|M|$ denoting the number of classes,  let
\[
\eta =(1+(|M|-1)\alpha )^{-1},
\;\;  \eta (x)=(1+(|M|-1)\alpha (x))^{-1} \mbox{ for } x \in {\cal X}.\]
Then $\; 0 < \eta, \eta (x) \leq  1/2$.
\end{remark}

The following result provides the main technical tool. Its proof will be given in Section \ref{sec:proof}.
We use the notation  $ x_{-n}^{0}$ for \( (x_{0},\ldots ,x_{-n}) \)
and in the same manner
we use $ X_{-n}^{0}$.

\begin{theorem}\label{MayMin}
Let $l,n \in \N$. Consider a hidden Markov model which starts in some time point $-m < \min \{-l, -n\}$.
Let  \( A\subseteq M \) and fix \( x_{0},\ldots ,x_{-n}\in {\cal X}  \).
Set
\[
m_{l}^{+}=\max _{i\in M}P(Y_{0}\in A|X_{-n}^{0}=x_{-n}^{0},Y_{-l}=i),\, l\in \N \]
and
\[
m_{l}^{-}=\min _{i\in M}P(Y_{0}\in A|X_{-n}^{0}=x_{-n}^{0},Y_{-l}=i),\,  l\in \N \, .\]
Then for all \( l\in \N  \)
\[
m_{l}^{+}-m_{l}^{-}\leq \prod _{j=-l+1}^{0}(1-2\hat{\eta }_{j})\, \, \, ,\]
where \( \hat{\eta }_{j} = \eta (x_{j}) \) for \(  j\in \{-n,\ldots ,0\}\) and
\( \hat{\eta }_{j} = \eta  \) for \( j\notin \{-n,\ldots ,0\} \).

\end{theorem}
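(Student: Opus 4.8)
The plan is to reduce the statement to a one-step contraction estimate for the conditional probabilities and then iterate. Write
\[
g_{l}(i)=P(Y_{0}\in A\mid X_{-n}^{0}=x_{-n}^{0},Y_{-l}=i),
\]
so that $m_{l}^{+}=\max_{i}g_{l}(i)$ and $m_{l}^{-}=\min_{i}g_{l}(i)$. First I would condition on the intermediate state $Y_{-l+1}$ and exploit the Markov property of $(Y_{-j})$ together with the conditional independence of the observations: given $Y_{-l+1}=k$, the event $\{Y_{0}\in A\}$ and the observations at times $\ge -l+1$ are independent of $Y_{-l}$ and of the observations at times $<-l+1$. Consequently the inner conditional probability collapses to $g_{l-1}(k)$, and one obtains the recursion
\[
g_{l}(i)=\sum_{k\in M}w_{i}(k)\,g_{l-1}(k),\qquad w_{i}(k)=P(Y_{-l+1}=k\mid X_{-n}^{0}=x_{-n}^{0},Y_{-l}=i).
\]
All conditional probabilities are well defined under $(A)$, since the positivity of the densities keeps the relevant denominators strictly positive.

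Next I would turn the recursion into a contraction. Let $i^{*},i_{*}$ attain the maximum and minimum of $g_{l}$. Since the weight vectors $w_{i^{*}}(\cdot)$ and $w_{i_{*}}(\cdot)$ are probability distributions, $\sum_{k}(w_{i^{*}}(k)-w_{i_{*}}(k))=0$; splitting this difference into its positive and negative parts and bounding $g_{l-1}$ from above by $m_{l-1}^{+}$ on the positive part and from below by $m_{l-1}^{-}$ on the negative part gives
\[
m_{l}^{+}-m_{l}^{-}=\sum_{k}\bigl(w_{i^{*}}(k)-w_{i_{*}}(k)\bigr)g_{l-1}(k)\le S\,(m_{l-1}^{+}-m_{l-1}^{-}),
\]
where $S=\sum_{k:\,w_{i^{*}}(k)>w_{i_{*}}(k)}(w_{i^{*}}(k)-w_{i_{*}}(k))$ is the total variation distance between the two weight vectors. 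It then remains to show $S\le 1-2\hat{\eta}_{-l+1}$.

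The crux is a uniform lower bound on the weights. By Bayes' formula and the same conditional independence, $w_{i}(k)$ is proportional (in $k$) to $p_{ik}L(k)$, where $L(k)$ is the likelihood of the forward observations given $Y_{-l+1}=k$ and does not depend on $i$. Expanding $L$ one step further exposes a factor $f_{k}(x_{-l+1})$ — present exactly when $-l+1\in\{-n,\ldots,0\}$ — and a sum $\sum_{k''}p_{kk''}H(k'')$ over the next state. Using $\sum_{k''}p_{k'k''}H(k'')\le(\max_{k''}p_{k'k''}/p_{kk''})\sum_{k''}p_{kk''}H(k'')$ and collecting terms, every ratio $p_{ik'}L(k')/\bigl(p_{ik}L(k)\bigr)$ is bounded by $\alpha(x_{-l+1})$ when the density factor is present and by $\alpha$ otherwise; this is precisely the two-step quantity appearing in the definitions of $\alpha$ and $\alpha(x)$ (with $\iota=k'$, $\kappa=k$, and $j=k''$). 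Writing $\hat\alpha$ for the relevant one,
\[
\frac{1}{w_{i}(k)}=1+\sum_{k'\ne k}\frac{p_{ik'}L(k')}{p_{ik}L(k)}\le 1+(|M|-1)\hat\alpha,
\]
so that $w_{i}(k)\ge\hat{\eta}_{-l+1}$ for all $i,k$.

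Finally, since $w_{i}(k)\ge\hat\eta_{-l+1}$ for every $k$, the overlap of the two weight vectors satisfies $\sum_{k}\min(w_{i^{*}}(k),w_{i_{*}}(k))\ge|M|\hat\eta_{-l+1}$, whence $S\le 1-|M|\hat\eta_{-l+1}\le 1-2\hat\eta_{-l+1}$ using $|M|\ge2$. This yields the one-step estimate $m_{l}^{+}-m_{l}^{-}\le(1-2\hat\eta_{-l+1})(m_{l-1}^{+}-m_{l-1}^{-})$, and iterating down to $l=0$, where $g_{0}(i)=\mathbf{1}_{A}(i)$ and hence $m_{0}^{+}-m_{0}^{-}\le1$, produces the claimed product $\prod_{j=-l+1}^{0}(1-2\hat\eta_{j})$. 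I expect the main obstacle to be the conditional-independence bookkeeping — tracking exactly which observations survive the conditioning on $Y_{-l}$ and $Y_{-l+1}$ — together with the likelihood-ratio estimate, whose need to look two transitions ahead is what forces the two-step form of $\alpha$; the case distinction on whether time $-l+1$ carries an observation is exactly what produces $\eta$ versus $\eta(x)$.
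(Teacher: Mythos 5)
Your proposal is correct and follows essentially the same route as the paper: the recursion $g_{l}(i)=\sum_{k}w_{i}(k)g_{l-1}(k)$ obtained by conditioning on the intermediate hidden state is exactly the paper's inductive step, and your lower bound $w_{i}(k)\geq\hat{\eta}_{-l+1}$, derived from the two-step likelihood-ratio estimate $p_{ik'}L(k')/(p_{ik}L(k))\leq\hat{\alpha}$, is precisely the content of the paper's Lemma \ref{lem:41} and Corollary \ref{Korollar}. The only (cosmetic) difference is that you extract the contraction factor via the total-variation overlap of the two weight vectors, which even gives the slightly sharper $1-|M|\hat{\eta}_{-l+1}$ before weakening to $1-2\hat{\eta}_{-l+1}$, whereas the paper substitutes $m_{l}^{\pm}$ directly into the recursion and isolates the terms at the extremizing states $j^{\pm}$.
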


In the following corollary the probabilities $P(Y_{0}\in A|X_{-l}^{0}=x_{-l}^{0})$ and
$P(Y_{0}\in A|X_{-l-k}^{0}=x_{-l-k}^{0})$ are treated. The first will pertain to a hidden Markov model
which starts
in some time point $< -l$, the second to one which starts in some time point $< -l-k$. Note that terms
 of the form $P(Y_{0}\in A|X_{-l}^{0}=x_{-l}^{0},Y_{-l-1}=i)$ are identical in both models due to the
identical transition matrix
and the identical kernel $Q$.

\begin{corollary}\label{cor:23}
Let \( l,k\in \N  \), \( A\subseteq M \) and \( x_{0},x_{-1},\ldots , x_{-l-k} \in {\cal X}  \).
Then

\[
|P(Y_{0}\in A|X_{-l}^{0}=x_{-l}^{0})-P(Y_{0}\in A|X_{-l-k}^{0}=x_{-l-k}^{0})|
\leq \prod _{j=-l}^{0}(1-2\eta (x_{j}))\, .\]

\end{corollary}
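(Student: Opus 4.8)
The plan is to derive Corollary \ref{cor:23} from Theorem \ref{MayMin} by conditioning on $Y_{-l-1}$ and exploiting the fact that the two models agree on all the relevant conditional probabilities given $Y_{-l-1}$. The key observation flagged in the remark preceding the corollary is that $P(Y_0 \in A \mid X_{-l}^0 = x_{-l}^0, Y_{-l-1} = i)$ is \emph{the same quantity} in both the model started before $-l$ and the model started before $-l-k$, because it depends only on the transition matrix and the kernel $Q$, not on the initial distribution. This is the structural fact that lets me compare the two models at all.

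First I would write both conditional probabilities as mixtures over the value of $Y_{-l-1}$. For the shorter-memory term, I would use the tower property to write
\[
P(Y_0 \in A \mid X_{-l}^0 = x_{-l}^0) = \sum_{i \in M} P(Y_0 \in A \mid X_{-l}^0 = x_{-l}^0, Y_{-l-1} = i)\, w_i,
\]
where $w_i = P(Y_{-l-1} = i \mid X_{-l}^0 = x_{-l}^0)$ is a probability weight in the first model. Similarly the longer-memory term becomes a mixture of the \emph{same} conditional probabilities $P(Y_0 \in A \mid X_{-l}^0 = x_{-l}^0, Y_{-l-1} = i)$ but against possibly different weights $w_i' = P(Y_{-l-1} = i \mid X_{-l-k}^0 = x_{-l-k}^0)$ coming from the second model; here I need to observe that conditioning further on $X_{-l-1}, \ldots, X_{-l-k}$ only affects the distribution of $Y_{-l-1}$ and, given $Y_{-l-1}$, leaves $P(Y_0 \in A \mid X_{-l}^0, Y_{-l-1})$ unchanged by the Markov/conditional-independence structure.

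The heart of the argument is then a standard mixture estimate: both quantities are weighted averages of the same collection of numbers $a_i := P(Y_0 \in A \mid X_{-l}^0 = x_{-l}^0, Y_{-l-1} = i)$, one with weights $(w_i)$ and one with weights $(w_i')$. Any two such weighted averages differ by at most $\max_i a_i - \min_i a_i$, since each average lies between the extreme values. Thus
\[
\bigl| P(Y_0 \in A \mid X_{-l}^0 = x_{-l}^0) - P(Y_0 \in A \mid X_{-l-k}^0 = x_{-l-k}^0) \bigr| \leq \max_i a_i - \min_i a_i.
\]
Now I apply Theorem \ref{MayMin} with $n = l$ and conditioning index $l+1$: the right-hand side $\max_i a_i - \min_i a_i$ is exactly $m_{l+1}^+ - m_{l+1}^-$, which is bounded by $\prod_{j=-l}^{0}(1 - 2\hat\eta_j)$. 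Since the relevant indices $j \in \{-l, \ldots, 0\}$ all lie in $\{-n, \ldots, 0\} = \{-l, \ldots, 0\}$, each $\hat\eta_j$ equals $\eta(x_j)$, so the product is precisely $\prod_{j=-l}^{0}(1 - 2\eta(x_j))$, matching the claimed bound.

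The main obstacle I anticipate is the careful bookkeeping in the second step: justifying rigorously that conditioning on the extra past observations $X_{-l-1}, \ldots, X_{-l-k}$ in the longer model does not alter $a_i = P(Y_0 \in A \mid X_{-l}^0, Y_{-l-1} = i)$, and that these $a_i$ coincide across the two models. This requires invoking the conditional independence of the observables given the unobservables together with the Markov property of $(Y_{-n})$ to show $Y_0$ (and hence $\mathbf{1}_{\{Y_0 \in A\}}$ jointly with $X_{-l}^0$) is conditionally independent of $X_{-l-1}^{-l-k}$ given $Y_{-l-1}$. Once this conditional-independence reduction is clean, the mixture inequality and the application of Theorem \ref{MayMin} are routine.
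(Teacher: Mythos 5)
Your proposal is correct and follows essentially the same route as the paper: decompose both probabilities as mixtures over $Y_{-l-1}$, use conditional independence to identify the mixed quantities $a_i$ across the two models, bound the difference of the two weighted averages by $\max_i a_i - \min_i a_i = m_{l+1}^+ - m_{l+1}^-$, and apply Theorem \ref{MayMin} with $n=l$ so that all $\hat\eta_j$ reduce to $\eta(x_j)$. The paper's proof is exactly this chain of (in)equalities.
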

\begin{proof}
We obtain
\begin{eqnarray*}
&& |P(Y_{0}\in A|X_{-l}^{0}=x_{-l}^{0})-P(Y_{0}\in A|X_{-l-k}^{0}=x_{-l-k}^{0})|\\
 & = & |\sum _{\iota \in M}P(Y_{0}\in A|X_{-l}^{0}=x_{-l}^{0},Y_{-l-1}=\iota )P(Y_{-l-1}=\iota |X_{-l}^{0}=x_{-l}^{0})\\
 &  & \, \, \, \, \, -\sum _{\kappa \in M}P(Y_{0}\in A|X_{-l-k}^{0}=x_{-l-k}^{0},Y_{-l-1}=\kappa )P(Y_{-l-1}=\kappa |X_{-l-k}^{0}=x_{-l-k}^{0})| \\
 &  = & |\sum _{\iota \in M}P(Y_{0}\in A|X_{-l}^{0}=x_{-l}^{0},Y_{-l-1}=\iota )
P(Y_{-l-1}=\iota |X_{-l}^{0}=x_{-l}^{0})\\
 &  & \, \, \, \, \, -\sum _{\kappa \in M}P(Y_{0}\in A|X_{-l}^{0}=x_{-l}^{0},Y_{-l-1}=\kappa )
P(Y_{-l-1}=\kappa |X_{-l-k}^{0}=x_{-l-k}^{0})|\\
   &  \leq  & \max_{i\in M}P(Y_{0}\in A|X_{-l}^{0}=x_{-l}^{0},Y_{-l-1}=i)\\
 &  & \, \, \, \, \, -\min _{i\in M}P(Y_{0}\in A|X_{-l}^{0}=x_{-l}^{0},Y_{-l-1}=i)\\
 &  = & m_{l+1}^{+}-m_{l+1}^{-}
  \leq \prod _{j=-l}^{0}(1-2\eta(x_{j}))\,.
\end{eqnarray*}
using Theorem \ref{MayMin}.

\end{proof}
We now introduce the constants used  for the  exponential bound.

\begin{remark}\label{rem:const}
\begin{enumerate}[(i)]
\item
Set
$$ \beta =\min_{k \in M} \int \frac{1}{ 1 + (|M|-1)\alpha(x)} f_{k}(x)\mu(dx) \;\mbox{ and }\;
\gamma =1-2\beta \,. $$
Then
$$ 0 < \beta \leq \frac{1}{2} \; \mbox{ and }\; 0 \leq \gamma < 1.$$
For all $k \in M$
$$ \int (1 - 2\eta(x)) f_{k}(x)\mu(dx) \leq \gamma.$$
\item For the following result we need additional constants which arise from basic Markov process theory. A transition matrix $Q$ is called uniformly ergodic if there exists a unique stationary probability distribution $\pi$ and there exist constants $a>0,~0<b<1,$ such that for any Markov chain $(Y_n)_{n\geq t}$ with transition matrix $Q$ and any initial distribution at time $t$
\[\|P^{Y_{t+k}}-\pi\|\leq a\cdot b^k,~~k\in\N,\]
in total variation norm $\|\cdot\|$. With the same meaning, also the process $(Y_n)_{n\geq t}$ is called uniformly ergodic. Assumption $(A)$ above implies uniform ergodicity, so that we have for each Markov chain constants $a,b$ as above, see \cite[Chapter 16]{Meyn2012} for a general treatment.
\end{enumerate}
\end{remark}

%
%

\begin{theorem}\label{thm:25}
There exist constants $a>0,~0<b,\gamma<1$ such that for all \( l,k\in \N  \)
\[
  | R_l -R_{l+k}| \leq \gamma ^{l+1}\, ,\]
if $R_l$ and $R_{l+k}$ come from the same model started at some time point $< -l-k$,
$$
| R_l -R_{l+k}| \leq 2 ( \gamma^{l/2} + ab^{l/2}) \, ,$$
in the general case that  $R_l$ and $R_{l+k}$ come from possibly different models, the first started in some time point $< -l$, the second in some time point $< -l-k$.
\end{theorem}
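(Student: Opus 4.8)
The plan is to reduce each risk difference to an integrated difference of posterior probabilities and then invoke Corollary \ref{cor:23}. Writing the Bayes risk with memory $l$ as $R_l = \int \min_y P(Y_0 \neq y \mid X_{-l}^0)\, dP$, I will use throughout that for real vectors $|\min_y a_y - \min_y b_y| \le \max_y |a_y - b_y|$, so that pointwise control of the posteriors transfers directly to a bound on the risk difference.

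\emph{First inequality (same model).} Here $R_l$ and $R_{l+k}$ are computed under one and the same measure $P$ (the model started at some point $< -l-k$), so the laws of $X_{-l}^0$ and $X_{-l-k}^0$ are consistent. Applying Corollary \ref{cor:23} with $A = M\setminus\{y\}$ gives, for every fixed $x_{-l-k}^0$,
\[
\Big|\min_y P(Y_0 \neq y \mid X_{-l}^0 = x_{-l}^0) - \min_y P(Y_0 \neq y \mid X_{-l-k}^0 = x_{-l-k}^0)\Big| \le \prod_{j=-l}^0 (1-2\eta(x_j)).
\]
Integrating against $P$ and taking the conditional expectation given the whole chain $(Y_j)$, the factors $1-2\eta(X_j) \in [0,1)$ are nonnegative and, by conditional independence of the observations, factorise; Remark \ref{rem:const}(i) bounds each conditional factor $\int (1-2\eta(x)) f_{Y_j}(x)\,\mu(dx)$ by $\gamma$. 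Since the product runs over the $l+1$ indices $j \in \{-l,\dots,0\}$, this yields $|R_l - R_{l+k}| \le \gamma^{l+1}$.

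\emph{Second inequality (different models).} Denote by $P_i$ and $R^{(i)}_{\cdot}$ the quantities in model $i$ (with $i=1$ started $<-l$, $i=2$ started $<-l-k$) and set $h=\lfloor l/2\rfloor$. The idea is to cut the memory in half: the recent block is handled by the first inequality, the distant block serves as an ergodic buffer. First, exactly the computation above, applied inside each single model and comparing memory $h$ with the full memory, gives $|R_l - R^{(1)}_h| \le \gamma^{h+1}$ and $|R_{l+k} - R^{(2)}_h| \le \gamma^{h+1}$, both bounded by $\gamma^{l/2}$ since $h+1 \ge l/2$. It remains to compare the two memory-$h$ risks, which differ only through the initial distribution. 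The joint law of $(Y_0, X_{-h}^0)$ is the image of the law of $Y_{-h-1}$ under one fixed Markov kernel (the common transitions for $Y_{-h},\dots,Y_0$ composed with the common emission kernel), so by the data-processing inequality the total-variation distance between these joint laws is at most $\|P_1^{Y_{-h-1}} - P_2^{Y_{-h-1}}\|$. Forward uniform ergodicity (Remark \ref{rem:const}(ii)) applies on the buffer: since each model starts at least $m_i - h - 1 \ge l - h \ge l/2$ steps before time $-h-1$, both laws of $Y_{-h-1}$ lie within $a\,b^{l/2}$ of $\pi$, hence within $2a\,b^{l/2}$ of each other. Finally the Bayes risk with fixed memory is $1$-Lipschitz in the joint law of $(Y_0, X_{-h}^0)$ for the total-variation norm, again by the estimate $\max_y|\cdots|\le\sum_y|\cdots|$ on the densities, so $|R^{(1)}_h - R^{(2)}_h| \le 2a\,b^{l/2}$. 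The triangle inequality then gives $|R_l - R_{l+k}| \le 2\gamma^{l/2} + 2a\,b^{l/2} = 2(\gamma^{l/2} + a\,b^{l/2})$.

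\emph{Main obstacle.} The delicate point is the second inequality: because the two models carry possibly different, unknown initial distributions and because the estimate of Theorem \ref{MayMin} and Corollary \ref{cor:23} is inherently backward-looking (its decay stems from the emission densities, not from ergodicity of $Y$), one cannot apply forward ergodicity directly to the conditioned posteriors. Splitting the memory at $h\approx l/2$ is what reconciles the two mechanisms: on the recent block $\{-h,\dots,0\}$ the backward estimate supplies the $\gamma^{l/2}$ term, while the distant block $\{-m_i,\dots,-h-1\}$ is free of conditioning and lets plain forward ergodicity of the hidden chain, transported through the data-processing inequality, supply the $a\,b^{l/2}$ term. Verifying that the transition/emission kernel acts as a genuine total-variation contraction and that the Bayes risk is Lipschitz in the joint law are the two technical checks that make this split rigorous.
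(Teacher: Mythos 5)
Your proof is correct and follows essentially the same route as the paper: the identical integral estimate via Corollary \ref{cor:23} and Remark \ref{rem:const}(i) for the same-model case, and the identical three-term split at $\lfloor l/2\rfloor$ with forward uniform ergodicity controlling the middle term in the general case. Your justification of that middle term (data-processing from the law of $Y_{-h-1}$ plus $1$-Lipschitzness of the Bayes risk in total variation) is in fact a slightly more explicit rendering of the paper's remark that the conditional risks given the hidden state at the cut coincide in both models.
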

\begin{proof}
The constants $a,b,\gamma$ will be those introduced in Remark \ref{rem:const}.\\
Let us firstly consider the case that  $R_l$ and $R_{l+k}$ stem
 from the same model.
Using  the generic symbol $f$ to denote densities in this  model we  write the joint density as
$f(x_{-l-k}^{0})$ and
the joint conditional density as $f_{y_{-l-k}^{0}}(x_{-l-k}^{0})$. With this notation
we have
\[
f(x_{-l-k}^{0})=\sum _{y_{-l-k}^{0}}P(Y_{-l-k}^{0}=y_{-l-k}^{0})\; f_{y_{-l-k}^{0}}(x_{-l-k}^{0})
\mbox{ and }
f_{y_{-l-k}^{0}}(x_{-l-k}^{0})=\prod _{j=-l-k}^{0}f_{y_{j}}(x_{j})\,,\] furthermore
$$ R_l = \int _{{\cal X}^{l+k+1}}\min _{i\in M}P(Y_{0}\neq i|X_{-l}^{0}=x_{-l}^{0})\;
f(x_{-l-k}^{0})\mu^{l+k+1}(dx_{-l-k}^{0}),$$
$$ R_{l+k}=  -\int _{{\cal X} ^{l+k+1}}\min _{i\in M}P(Y_{0}\neq i|X_{-l-k}^{0}=x_{-l-k}^{0})
\; f(x_{-l-k}^{0})\mu^{l+k+1}(dx_{-l-k}^{0}).$$

We obtain from Corollary \ref{cor:23}  and conditional independence
\begin{eqnarray*}
&&  | R_{l}-R_{l+k} |\\
& \leq  & \int _{{\cal X} ^{l+k+1}}\max_{i\in M}|P(Y_{0} \neq i|X_{-l}^{0}=x_{-l}^{0})-
 P(Y_{0} \neq i|X_{-l-k}^{0}=x_{-l-k}^{0})|\; f(x_{-l-k}^{0})\mu^{l+k+1}(dx_{-l-k}^{0})\\
& \leq  & \int _{{\cal X} ^{l+1}}\prod _{j=-l}^{0}(1-2\eta (x_{j}))\; f(x_{-l}^{0})\mu^{l+1}(dx_{-l}^{0})\\
 & = & \sum _{y_{-l}^{0}}[ P(Y_{-l}^{0}=y_{-l}^{0})\int _{{\cal X} ^{l+1}}
\prod _{j=-l}^{0}(1-2\eta (x_{j}))
 \; f_{y_{-l}^{0}}(x_{-l}^{0})\mu^{l+1}(dx_{-l}^{0}) ] \\
 &  = & \sum _{y_{-l}^{0}}[ P(Y_{-l}^{0}=y_{-l}^{0})\prod _{j=-l}^{0}\int _{{\cal X} }(1-2\eta (x_{j}))
f_{y_{j}}(x_{j})\mu(dx_{j})] \\
  &   \leq & \sum _{y_{-l}^{0}}P(Y_{-l}^{0}=y_{-l}^{0}) \gamma ^{l+1} = \gamma ^{l+1}\,.\\
\end{eqnarray*}
Let us now look at the general case with models 1 and 2, $R_l=R_l^1$ stemming from model 1,
$R_{l+k}= R_{l+k}^2$ from model 2 respectively. Then
$$|R_l^1 - R_{l+k}^2| \leq |R_l^1 - R^1_{\lfloor l/2 \rfloor| }|
                   + |  R^1_{\lfloor l/2 \rfloor} -  R^2_{\lfloor l/2 \rfloor}|
                  +   |  R^2_{\lfloor l/2 \rfloor} -  R^2_{ l + k}|.$$
From the first part of the assertion
$$  |R_l^1 - R^1_{\lfloor l/2 \rfloor| }|
                +   |  R^2_{\lfloor l/2 \rfloor} -  R^2_{ l + k}|
                \leq 2 \gamma^{ l/2}.$$
To treat  $R^1_{\lfloor l/2 \rfloor}$ and   $  R^2_{\lfloor l/2 \rfloor}$ we note that the
conditional Bayes risks for time lag $\lfloor l/2 \rfloor$
given $Y_{-\lfloor l/2 \rfloor}$
are the same in both models hence the
unconditional risks differ by at most the total variation distance
between the two distributions of  $Y_{\lfloor l/2 \rfloor}$ in the two
models. This quantity is  $\leq 2ab^{l/2}$ since both models have been running
for at least $l - \lfloor l/2 \rfloor$ time points, hence
$$  |  R^1_{\lfloor l/2 \rfloor} -  R^2_{\lfloor l/2 \rfloor}| \leq  2ab^{l/2}.$$
\end{proof}
From this we easily obtain our main result.

\begin{theorem}
There exist constants $a>0,~0<b,\gamma<1$ such that for all $l \in \N$
$$ |  R^{*} - R_l | \leq 2 ( \gamma^{l/2} + ab^{l/2}) \, , $$
in particular for $l \to \infty$
$$ R_l \to R^{*}\;.$$
\end{theorem}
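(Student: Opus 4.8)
The bound to be proved is syntactically identical to the general-case bound already furnished by Theorem \ref{thm:25}; the only new ingredient is to realize $R^*$ as a limit of genuine memory-$(l+k)$ risks and to pass to the limit inside that inequality. So the plan is to interpose a stationary memory-$(l+k)$ risk between $R_l$ and $R^*$, bound the first gap by Theorem \ref{thm:25} and the second by martingale convergence, and then send $k\to\infty$ while holding $l$ fixed.

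\textbf{Key steps.} First I would recall, as noted in the Introduction, that $R^*$ is the risk of the full-past \emph{stationary} model, $R^*=\int \min_y P(Y_0\neq y\mid (X_{-n})_{n\in\N_0})\,dP$, and that the stationary memory-$j$ risks satisfy $R_j^*\downarrow R^*$ as $j\to\infty$ by martingale convergence. Next, fix $l\in\N$ and an arbitrary $k\in\N$, and apply the triangle inequality
\[
|R^*-R_l|\leq |R_l-R_{l+k}^*|+|R_{l+k}^*-R^*|.
\]
For the first summand I would invoke the general case of Theorem \ref{thm:25}, taking ``model $1$'' to be the model from which $R_l$ is computed (started at some time $<-l$) and ``model $2$'' to be the stationary model, whose full-past memory-$(l+k)$ risk is exactly $R_{l+k}^*$ and which may be regarded as started at any time $<-(l+k)$ with initial distribution $\pi$; this yields $|R_l-R_{l+k}^*|\leq 2(\gamma^{l/2}+ab^{l/2})$. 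The crucial point is that this bound does \emph{not} depend on $k$. Hence, letting $k\to\infty$, the second summand $|R_{l+k}^*-R^*|$ tends to $0$ while the first stays put, giving $|R^*-R_l|\leq 2(\gamma^{l/2}+ab^{l/2})$. Finally, since $0<\gamma,b<1$, the right-hand side tends to $0$ as $l\to\infty$, so $R_l\to R^*$.

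\textbf{Main obstacle.} The delicate step is matching the full-past stationary model to the hypotheses of Theorem \ref{thm:25}, which are phrased for models ``started at some time point''; I must argue that $R_{l+k}^*$ is indeed the memory-$(l+k)$ risk of such a model (started sufficiently far in the past with initial distribution $\pi$), so that the general-case inequality legitimately applies with model $2$ stationary. Equally essential is the observation that the resulting bound is uniform in $k$, which is precisely what makes the passage $k\to\infty$ clean. Once these are in place the remainder is a routine triangle inequality together with the martingale convergence $R_{l+k}^*\to R^*$ granted in the Introduction.
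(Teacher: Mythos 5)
Your proposal is correct and follows essentially the same route as the paper: the authors likewise apply the general-case bound of Theorem \ref{thm:25} to $|R_l - R^*_{l+k}|$ with the second model taken to be the stationary one, observe that the bound $2(\gamma^{l/2}+ab^{l/2})$ is uniform in $k$, and conclude via $R^*_{l+k}\to R^*$ by martingale convergence. No issues.
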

\begin{proof}
As in Theorem \ref{thm:25}, the constants $a,b,\gamma$ are those of Remark \ref{rem:const}.
Recall that $R^{*}_l$ is the Bayes risk in the  stationary case.
As already stated earlier,  $\lim _{l} R^{*}_l = R^{*}$ by martingale convergence.
 Theorem \ref{thm:25} shows
$$ |  R_l - R^{*}_{l+k}| \leq  2( \gamma^{l/2} + ab^{l/2})$$
for all $k$, proving  the assertion.

\end{proof}

\section{Kernel Classification With Memory}\label{sec:kernel}

Optimal classification procedures provide benchmarks for the actual behaviour of data driven classification procedures which do not require knowledge of the underlying distribution.
A general principle from statistical classification involves the availability  of  a training sequence
$(x'_1,y'_1,\ldots,x'_n,y'_n)$ where the $x'_i$ have been recorded together with the $y'_i$.
This training sequence is
used for the construction of a regression estimator
$$ {\hat p}(y|x;x'_1,y'_1,\ldots,x'_n,y'_n) \mbox{ for } P(Y=y|X=x)$$
which leads to the classification rule
$$ x \mapsto \argmax_y \,{\hat p}(y|x;x'_1,y'_1,\ldots,x'_n,y'_n).$$
When we choose  a kernel
\[ K: {\cal X} \rightarrow [0,\infty ) \]
and use the common kernel regression estimate we arrive at the kernel classification rule
\[  x \mapsto \argmax_{y} \sum_{i=1}^{n} K\left(\frac{x-x'_{i}}{h}\right)1_{\{y'_{i}=y\}}.\]
The asymptotic behaviour, as the size of the training sequence tends to infinity, has been
thoroughly investigated for such classification rules and in particular for kernel classification rules.
In the i.i.d. case or more generally under suitable mixing conditions, such procedures
are risk consistent in the following sense: Kernel classification
rules asymptotically
achieve  the minimal
risk of misclassification  for  ${\cal X}= \R^d$ if
the size $n$ of the training sequence tends to $\infty$ and
$h=h(n)$ satisfies $h(n) \rightarrow 0$ and $nh(n)^{d} \rightarrow \infty$. As remarked in the Introduction, this type of consistency follows from the
consistency of the corresponding regression estimator, hence any result on regression consistency translates into a result on risk consistency.

 It is the aim of this section to discuss the applicability of kernel classification
with memory  in hidden Markov models.
Assume that the training sequence $X'_{1},Y'_{1},\ldots,X'_{n},Y'_{n}$ is generated according
to a hidden Markov model and that there is a sequence of observations
\[ \ldots, X_{-l},X_{-l-1},\ldots  , X_{0}\mbox{ to be classified}\]
which stems from the stationary  hidden Markov model with the same transition matrix and the same kernel and
is stochastically independent of the training sequence.
For the classification of $X_{0} = x_{0}$ the usual kernel classification rule as described
above would classify $x_{0}$ as belonging to the class
\[  \argmax_{y} \sum_{i=1}^{n}
K\left(\frac{x_{0}-X'_{i}}{h}\right)1_{\{Y'_{i}=y\}}.\]
This ignores the Markovian structure, so we want to use memory
as in the optimal classification of the preceding Section \ref{sec:exp}.
We propose the following procedure.
Fix some memory   $l=1,2,\ldots$ prescribing the number of
preceding  observations used in the classification of the current one. Use a kernel
\[ K : {\cal X}^{l+1} \rightarrow [0,\infty) \]
and, assuming a training sequence of size $n+l$,
 classify observation $x_{0}$  as originating from the class
\[ \argmax_{y} \sum_{i=1}^{n} K\left(\frac 1 h
                              ((x_{-l},\ldots,x_{0})-(X'_{i},\ldots,X'_{i+l}))\right)1_{\{Y'_{i+l}=y\}}.\]
Compared to rules
 without memory  the role of $x_{0}$ is taken by $(x_{-l},\ldots,x_{0})$ whereas
the role of $(X'_{i},Y'_{i})$ is taken by $(X'_{i},Y'_{i},\ldots,X'_{i+l},Y'_{i+l})$.

The approach we propose here leads to a
risk consistent procedure for hidden Markov models, i.e.  the risk converges to the corresponding
Bayes risk when, for fixed  $l$,  the size of the training sample
 $n$ tends to $\infty$.
The proof of this risk consistency adapts the methods of proof for the i.i.d. case to the Markov model we have here. We present the basic facts here and refer to \cite{Irle1997} for a detailed treatment; see also \cite[Chapter 13]{gyorfi1989nonparametric}. 

The kernel $K$ has to satisfy that for any $y$
\[\frac{E1_{\{Y=y\}}K(\frac{1}{k}[(x_{-l},...,x_{0})-(X_{-l},...,X_{0})])}{EK(\frac{1}{k}[(x_{-l},...,x_{0})-(X_{-l},...,X_{0})])}\rightarrow P(Y=y|X_{-l}=x_{-l},...,X_{0}=x_{0})\]
for a.a. $(x_{-l},...,x_{0})$ as $k\rightarrow 0$. 
Any kernel $K$ such that $K\geq0$, $K$ is bounded with bounded support, and there exist $t_0,c>0$ such that $K(z)\geq c$ for $\|z\|\leq t_0$, fulfills the above condition, see, e.g., \cite[10.1]{gyorfi1996probabilistic}. We call such a kernel regular.


Next note that we consider a uniformly ergodic transition matrix for our Markov chain. Looking at the hidden Markov model forward in time, the process $(X_n',Y_n')_{n\in\N}$ forms a Markov chain with state space $\mathcal{X}\times M$ in discrete time. The stationary distribution for this process is given by 
\[\pi'(A\times B)=\sum_{y\in B}Q(A,y)\pi(\{y\}).\]
It follows immediately that this process is again uniformly ergodic such that for $a>0,~0<b<1$, the constants for the $Y$-process, it holds that for all $n$
\[\|P^{(X_n',Y_n')}-\pi'\|\leq a\cdot b^n\]
since 
\begin{align*}
|P^{(X_n',Y_n')}(A\times B)-\pi'(A\times B)|&=|\sum_{y\in B}Q(A,y)(P(Y_n'=y)-\pi(\{y\}))|\\
&\leq |P(Y_n'\in B)-\pi(B)|\leq a\cdot b^n.
\end{align*}

In exactly the same manner, the process $(Z_n)_{n\in\N}=(X_n',Y_n',...,X_{n+l}',Y_{n+l}')$ is a Markov chain with state space $\mathcal{Z}=(\mathcal{X}\times M)^{l+1}$ and stationary probability distribution $\pi^{(l)}$ given by
\begin{align*}
\pi^{(l)}\big(\prod_{i=1}^{l+1} (A_i\times B_i) \big)&=\sum_{y_1\in B_1,...,y_{l+1}\in B_{l+1}}\left(\prod_{i=1}^{l+1}Q(A_i,y_i)\right)\pi(\{y_1\})p_{y_1,y_2}...p_{y_l,y_{l+1}},
\end{align*}
the $p$'s denoting the transition probabilities for the original chain. It is easily seen that this process is again uniformly ergodic where, with the same constants $a,b$, we have for all $n$
\[\|P^{Z_n}-\pi^{(l)}\|\leq a\cdot b^n.\]

Finally we note that any uniformly ergodic process is geometrically mixing in the sense that there exist $\alpha>0,~0<\beta<1$ such that for all $n$
\[|Ef(Z_{i+n})g(Z_n)-Ef(Z_{i+n})Eg(Z_n)|\leq \alpha\beta^i\]
for any measurable $f,g:\mathcal{Z}\rightarrow\R,~|f|,~|g|\leq 1,$ and any initial distribution, see \cite[Theorem 16.1.5]{Meyn2012}.


Using the foregoing notations we can obtain the following result. 

\begin{theorem}
Let $K$ be  regular 
and let $h(n)>0, n=1,2,\ldots$ be such that
$h(n) \rightarrow 0$ and $nh(n)^{d(l+1)} \rightarrow \infty$.
Denote the risk of the kernel classification rule by $L_{n}^{(l)}$. Then as $n\rightarrow\infty$
\[  L_{n}^{(l)} \rightarrow R_l^*.\]
\end{theorem}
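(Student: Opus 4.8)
The plan is to reduce the statement to the $L^1$-consistency of the underlying kernel regression estimate and then to establish that consistency by a bias--variance decomposition in which the dependence of the training sequence is absorbed through the geometric mixing property recorded above. Writing $x_{-l}^0=(x_{-l},\ldots,x_0)$ and setting
\[
\hat p_n(y\mid x_{-l}^0)=\frac{\sum_{i=1}^n K\!\big(\tfrac1h(x_{-l}^0-(X_i',\ldots,X_{i+l}'))\big)1_{\{Y_{i+l}'=y\}}}
{\sum_{i=1}^n K\!\big(\tfrac1h(x_{-l}^0-(X_i',\ldots,X_{i+l}'))\big)},
\]
the kernel rule is the plug-in classifier $\argmax_y\hat p_n(y\mid x_{-l}^0)$ for the regression function $p(y\mid x_{-l}^0)=P(Y_0=y\mid X_{-l}^0=x_{-l}^0)$ of the stationary model. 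By the standard bound relating the excess classification risk of a plug-in rule to the $L^1$ error of its regression estimate (see \cite[6.7]{gyorfi1996probabilistic}),
\[
0\le L_n^{(l)}-R_l^*\le 2\sum_{y\in M}E\big|\hat p_n(y\mid X_{-l}^0)-p(y\mid X_{-l}^0)\big|,
\]
where $X_{-l}^0$ is an independent test block drawn from the stationary model. It therefore suffices to show, for each $y$, that this $L^1$ error tends to $0$.

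First I would treat the bias. Abbreviate $g_i=K(\tfrac1h(x_{-l}^0-(X_i',\ldots,X_{i+l}')))1_{\{Y_{i+l}'=y\}}$, let $U_n(x_{-l}^0)=\tfrac1n\sum_i g_i$ and $V_n(x_{-l}^0)=\tfrac1n\sum_i K(\cdots)$ be the numerator and denominator, and let $\bar U_n,\bar V_n$ be their means. Since the block process $(Z_n)$ is uniformly ergodic with stationary law $\pi^{(l)}$, the Ces\`aro averages $\bar U_n,\bar V_n$ converge to their stationary values, and the regularity of $K$ (the displayed kernel condition above) then gives $\bar U_n/\bar V_n\to p(y\mid x_{-l}^0)$ for a.a.\ $x_{-l}^0$ as $h\to0$. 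As these ratios are bounded by $1$, dominated convergence makes the integrated bias against the stationary distribution of $X_{-l}^0$ vanish.

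The heart of the argument, and the main obstacle, is the variance term: unlike in the i.i.d.\ case, $U_n$ and $V_n$ are sums of dependent summands, so I must control $\operatorname{Var}(U_n(x_{-l}^0))=\tfrac1{n^2}\sum_{i,j}\operatorname{Cov}(g_i,g_j)$, and the analogue for $V_n$, relative to the mean $\bar V_n$, which is of order $h^{d(l+1)}$. I would split the double sum into three regimes. The diagonal terms give $O(h^{d(l+1)}/n)=o(h^{2d(l+1)})$ precisely under the hypothesis $nh^{d(l+1)}\to\infty$. For pairs with $0<|i-j|\le s_n$ I would bound each covariance by $O(h^{2d(l+1)})$ using the $O(h^{d(l+1)})$ integrated mass of $K(\cdot/h)$ together with boundedness of the relevant joint densities of the blocks, the $O(n)$ overlapping pairs being handled separately and absorbed under the same bandwidth condition. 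For pairs with $|i-j|>s_n$ I would invoke the geometric mixing bound $|\operatorname{Cov}(g_i,g_j)|\le \alpha\|K\|_\infty^2\beta^{|i-j|}$. Choosing the truncation $s_n$ to grow slowly, so that $s_n/n\to0$ while $\beta^{s_n}$ decays fast enough, renders both off-diagonal contributions $o(h^{2d(l+1)})$.

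Combining the regimes yields $\operatorname{Var}(U_n)/\bar V_n^2\to0$ and $\operatorname{Var}(V_n)/\bar V_n^2\to0$ pointwise, whence $\hat p_n(y\mid x_{-l}^0)\to p(y\mid x_{-l}^0)$ in probability on the high-probability event $\{V_n\ge \tfrac12\bar V_n\}$; boundedness by $1$ upgrades this to $L^1$ and, integrating once more against the stationary distribution of $X_{-l}^0$, gives the required convergence and hence $L_n^{(l)}\to R_l^*$. The full technical estimates of this variance step, which is where the approach departs from the i.i.d.\ theory, are the content of \cite{Irle1997}. The delicate point is the matching of the bandwidth scaling against the mixing decay in the near-diagonal regime: the crude mixing bound alone is insufficient there, and one must genuinely exploit the smallness of the kernel mass.
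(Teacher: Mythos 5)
Your proposal is correct and follows essentially the same route as the paper: it reduces the excess classification risk to consistency of the underlying kernel regression estimate (you via the $L^1$ plug-in bound, the paper via pointwise convergence in probability and Theorem 6.5 of the cited monograph, which is equivalent here since the quantities are bounded), and then derives that consistency from uniform ergodicity and geometric mixing together with $nh(n)^{d(l+1)}\to\infty$, ultimately deferring the technical covariance estimates to \cite{Irle1997} exactly as the paper does. Your explicit three-regime splitting of the covariance sum is a reasonable sketch of what that reference supplies; the paper itself gives no more detail than the citation.
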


\begin{proof}
The proof is based on the observation that \emph{classification is easier than regression function estimation}, see \cite[6.7]{gyorfi1996probabilistic}. To adapt this to our setting fix $y\in M$. Let for $(x_{-l},....,x_0)\in \mathcal X^{l+1}$
\[\hat{p}_n(x_{-l},....,x_0)=\frac{\sum_{i=1}^nK\big(\frac{1}{h(n)}((x_{-l},\ldots,x_{0})-(X'_{i},\ldots,X'_{i+l}))\big)1_{\{Y_{i+l}'=y\}}}{\sum_{i=1}^nK\big(\frac{1}{h(n)}((x_{-l},\ldots,x_{0})-(X'_{i},\ldots,X'_{i+l}))\big)}.\]
This is the kernel regression function estimator of size $n$ for
\[p(x_{-l},....,x_0)=P(Y_0=y|(X_{-l},...,X_0)=(x_{-l},....,x_0)),\]
with corresponding kernel classification rule
\[\argmax_y \sum_{i=1}^nK\big(\frac{1}{h(n)}((x_{-l},\ldots,x_{0})-(X'_{i},\ldots,X'_{i+l}))\big)1_{\{Y_{i+l}'=y\}}.\]
Now to show $L_n^{(l)}\rightarrow R^*_l$ it is enough to show that as $n\rightarrow\infty$ 
\begin{align}\label{eq:conv_prob}
\hat{p}_n(x_{-l},....,x_0)\rightarrow p(x_{-l},....,x_0)
\end{align}
in probability for almost all $(x_{-l},....,x_0)$, see \cite[Theorem 6.5]{gyorfi1996probabilistic}. For this we may apply \cite[Theorem 1]{Irle1997} and the application to kernel regression estimators in ibid, part 3, in particular the representation for $\hat{p}_n$, p.138. We than use uniform ergodicity, geometric mixing, regularity of the kernel, together with $nh(n)^{d(l+1)} \rightarrow \infty$ to infer that the conditions to apply \cite[Theorem 1 (i)]{Irle1997} are fulfilled. This then shows the assertion. 
\end{proof}
\begin{remark}
\begin{enumerate}[(i)]
\item The more complicated result of almost sure convergence $\hat{p}_n(x_{-l},....,x_0)\rightarrow p(x_{-l},....,x_0)$ in \eqref{eq:conv_prob} needs additional conditions; see \cite[Corollary 4]{Irle1997}, \cite[Chapter 13]{gyorfi1989nonparametric}. But here, we only need convergence in probability. 
\item This method of using past information to construct a classification procedure seems generally applicable. We simply have to replace $x_0$ by $(x_{-l},...,x_0)$ and the learning sequence $(X_n',Y_n')_n$ by $(X_n',Y_n',...,X_{n+l}',Y_{n+l}')_n$. E.g. for a nearest neighbor classification we would look for the nearest neighbor among the $(X_n',Y_n',...,X_{n+l}',Y_{n+l}')$ and use the resulting $Y_{n+l}'$ for classification. To show consistency, we can proceed in the same way as for kernel classification using the nearest neighbor regression estimate, compare \cite[Part 4]{Irle1997}.
\end{enumerate}
\end{remark}

So asymptotically as $n$ tends to infinity, the kernel classification rule performs as the optimal rule of Section \ref{sec:exp} and may be used as a typical nonparametric rule to test the usefulness of invoking preceding information.

From a practical point of view we now comment
on the performance in simulations and in recognition problems for isolated handwritten
letters which points to a saving in misclassifications.
\subsection*{Performance studies}

In the following we report on some typical results in our studies of the actual behaviour of the
kernel classification rule as proposed in this paper. As a general experience we
point out that memory $l>1$ did not lead to significant improvement over $l=1$ so
that we only compare the cases $l=0,l=1$.

(i)  In simulations 1 and 2 we choose $\ldots Y_{-1}, Y_{0},Y_{1},\ldots$ as a Markov chain with 4 states and transition probabilities
\[ \left[ \begin{array}{cccc}
  0 & 0 & 1 & 0 \\
  0 & 0 & 0 & 1 \\
  0.3 & 0.7 &0 & 0 \\
  0.7 & 0.3 & 0 & 0
\end{array} \right] \]
with stationary distribution  $(0.25,0.25,0.25,0.25)$.

In simulation 3 we choose $Y_{1},Y_{2},\ldots$ i.i.d. following the stationary distribution.
The $X_{i}$'s have a three-dimensional normal distribution with identical covariance matrix and mean vectors
\[ \begin{array}{ll}
(0,0,0)  &  \mbox{ in simulations 1,2,3 for class 1 },\\
(4,0,0)  &  \mbox{ in simulations 1,2,3 for class 2 },\\
(3.9,3.9,0) & \mbox{ in simulation 1, $(4,4,0)$ in simulations 2,3 for class 3,} \\
(0,3.9,0) & \mbox{ in simulation 1, $(3.8,3.8,0)$ in simulations 2,3 for class 4.}\end{array} \]
So there is good distinction between all classes in simulation 1 with easy classification,
there is poor distinction between classes 3 and 4 in simulations 2 and 3.
The following table gives the error rate for classification with size $n$ of the
training sequence in the first row. A normal kernel is used.
\begin{center}
 \begin{tabular}{c|c|c|c|c}
 sim & $l$ & 100 & 300 & 500 \\ \hline
1    & 0 & 0.03 & 0.03 & 0.03 \\
     & 1 & 0.01 & 0.03 & 0.02 \\ \hline
2    & 0 & 0.21 & 0.19 & 0.21 \\
     & 1 & 0.05 & 0.05 & 0.04 \\ \hline
3    & 0 & 0.30 & 0.28 & 0.31 \\
     & 1 & 0.34 & 0.33 & 0.30
\end{tabular}
\end{center}

This shows that use of the Markov structure in simulation 2 through $l=1$ leads to the
possibility of distinguishing between classes 3 and 4. In the i.i.d. case of simulation
3 an appeal to memory of course does not help.

\vspace{5 mm}

(ii) The classification of handwritten isolated capital letters was performed using
kernel methods. Features were obtained by transforming handwritten letters into
$16 \times 16$ grey-value matrices.
The learning sequence was obtained by merging samples from  seven different persons.

The following typical error rates
resulted from the classification of the word SAITE (german for 'string')
where error rates are writer dependent. A normal kernel was used with $h=1.0$ and $h=0.25$.

\begin{center}
 \begin{tabular}{c|c|c|c}

writer & $l$ & 1.0 & 0.25 \\ \hline

   1   & 0 & 0.261 & 0.083 \\
       & 1 & 0.012 & 0.007 \\ \hline
   2   & 0 & 0.334 & 0.115 \\
       & 1 & 0.025 & 0.022  \\ \hline
   3   & 0 & 0.166 & 0.075  \\
       & 1 & 0.030 & 0.024
\end{tabular}
\end{center}

Use of the Markov structure through $l=1$ seems to lead to
improved performance. Of course, the incorporation of memory can be applied
to any procedure of pattern recognition.
In particular we have also looked into nearest neighbor rules with memory $l$.
Our findings have been similar to those for the kernel rule as discussed above and also
advocate the use of memory $l=1$.

\section{Proofs for Section \ref{sec:exp}}\label{sec:proof}

\begin{lemma}\label{lem:41}
Let $l \in \N_0$. Consider a hidden Markov model which starts in some time point $-m < -l$
and let $T \subseteq \{-m,-m+1,\ldots,0\}$.

For any  \( x_{t}\in {\cal X} ,\: t\in T, \)
and  \( i,\iota ,\kappa \in M \) we have

(i) for \( -l\in T \)
\[
\frac{P(Y_{-l}=\iota |Y_{-l-1}=i, X_{t}=x_{t},t\in T)}
{P(Y_{-l}=\kappa |Y_{-l-1}=i, X_{t}=x_{t},t\in T)}\leq \alpha (x_{-l})\, ,\]

(ii) for  \( -l\notin T \)
\[
\frac{P(Y_{-l}=\iota |Y_{-l-1}=i,X_{t}=x_{t}:t\in T)}
{P(Y_{-l}=\kappa |Y_{-l-1}=i, X_{t}=x_{t},t\in T)}\leq \alpha \,  .\]

\end{lemma}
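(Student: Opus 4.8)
The plan is to factor the conditional probability $P(Y_{-l}=y \mid Y_{-l-1}=i,\, X_t=x_t,\, t\in T)$ according to the past/present/future split around time $-l$, so that its dependence on $y$ collapses into a product of an incoming transition probability, a present observation density, and one forward factor; the bound then follows by reading off the defining inequalities of $\alpha$ and $\alpha(x)$. First I would split $T$ into the past part $T_{<}=\{t\in T: t<-l\}$, the possible singleton $\{-l\}$, and the future part $T_{>}=\{t\in T: t>-l\}$. Since the numerator and denominator of the ratio in the statement share the same normalizing factor $P(X_t=x_t,\,t\in T \mid Y_{-l-1}=i)$, it suffices to compare the joint quantities $P(Y_{-l}=y,\, X_t=x_t,\, t\in T \mid Y_{-l-1}=i)$ for $y=\iota$ versus $y=\kappa$, which I would do at the level of densities (with respect to counting measure on the $Y$'s and $\mu$ on the $X$'s, all strictly positive by $(A)$ so that every conditioning event is nondegenerate).

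The key step is the factorization. Using the Markov property of $(Y_t)$ together with the conditional independence of the observations, I would establish
$$P\big(Y_{-l}=y,\, \{X_t=x_t\}_{t\in T}\,\big|\, Y_{-l-1}=i\big) = C_{<}\cdot p_{iy}\cdot \big[f_y(x_{-l})\big]\cdot \sum_{j\in M}p_{yj}\,h(j),$$
where $C_{<}$ collects the past observations $\{X_t\}_{T_{<}}$ and is a constant independent of $y$ (because $Y_{-l-1}=i$ is fixed), the bracketed density factor $f_y(x_{-l})$ is present precisely when $-l\in T$, and $h(j)=P(X_t=x_t,\, t\in T_{>}\mid Y_{-l+1}=j)\ge 0$ is the forward factor, again independent of $y$ (with the convention $h\equiv 1$ when $T_{>}=\emptyset$). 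The justification is that conditioning on $Y_{-l-1}=i$ renders the past observations independent of $Y_{-l}$, while conditioning on $Y_{-l}=y$ renders the future observations independent of $Y_{-l-1}$; the forward factor then decomposes through $Y_{-l+1}$ by the Markov property, producing the single sum $\sum_j p_{yj}h(j)$.

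Taking the ratio, $C_{<}$ cancels and I obtain
$$\frac{P(Y_{-l}=\iota\mid\cdots)}{P(Y_{-l}=\kappa\mid\cdots)} = \frac{p_{i\iota}\,\big[f_\iota(x_{-l})\big]\,\sum_{j}p_{\iota j}\,h(j)}{p_{i\kappa}\,\big[f_\kappa(x_{-l})\big]\,\sum_{j}p_{\kappa j}\,h(j)}.$$
For part (i), where $-l\in T$, I would invoke the definition of $\alpha(x)$ in the pointwise form $p_{i\iota}\,p_{\iota j}\,f_\iota(x_{-l}) \le \alpha(x_{-l})\,p_{i\kappa}\,p_{\kappa j}\,f_\kappa(x_{-l})$, valid for every $j\in M$; multiplying by $h(j)\ge 0$ and summing over $j$ yields exactly numerator $\le \alpha(x_{-l})\times$ denominator, hence the claimed bound. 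Part (ii), where $-l\notin T$, is identical but uses $p_{i\iota}\,p_{\iota j}\le \alpha\, p_{i\kappa}\,p_{\kappa j}$ with no density factor, again multiplied by $h(j)$ and summed. The main obstacle is getting the factorization right and recognizing that the two-step pattern $i\to\iota/\kappa\to j$ built into the definitions of $\alpha$ and $\alpha(x)$ is tailored precisely to match "incoming transition $p_{iy}$ times forward factor $\sum_j p_{yj}h(j)$": the whole argument hinges on being able to pull the comparison inside the sum over $j$ against the nonnegative, $y$-independent weights $h(j)$, after which the bound is a one-line term-by-term comparison.
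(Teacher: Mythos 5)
Your proposal is correct and follows essentially the same route as the paper: both proofs reduce to the times $\geq -l$ using the Markov property given $Y_{-l-1}=i$, factor the joint law through $Y_{-l+1}=j$ into $p_{iy}\,f_y(x_{-l})\sum_j p_{yj}h(j)$ with a $y$-independent forward factor $h(j)$, and then invoke the defining max in $\alpha$, $\alpha(x)$. The only cosmetic difference is that the paper bounds the resulting ratio of sums by $\max_j$ of the termwise ratios, whereas you compare term by term against the nonnegative weights $h(j)$ before summing; these are the same estimate.
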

\begin{proof}
We shall use the symbol $f$ in a generic way to denote joint and
conditional joint densities; also we use  \( T_{\geq s}= \{t\in T:t\geq s\} \).
From the properties of a hidden Markov model
we obtain

\begin{eqnarray*}
&&\frac{P(Y_{-l}=\iota |Y_{-l-1}=i,X_{t}=x_{t},t\in T)}{P(Y_{-l}=\kappa |Y_{-l-1}=i,X_{t}=x_{t},t\in T)}\\
  & = & \frac{P(Y_{-l}=\iota |Y_{-l-1}=i,X_{t}=x_{t},t\in T_{\geq -l})}
{P(Y_{-l}=\kappa |Y_{-l-1}=i,X_{t}=x_{t},t\in T_{\geq -l})} \\
  & = & \frac{P(Y_{-l-1}=i,Y_{-l}=\iota |X_{t}=x_{t},t\in T_{\geq -l})}
{P(Y_{-l-1}=i,Y_{-l}=\kappa |X_{t}=x_{t},t\in T_{\geq -l})}\\
  & = & \frac{\sum _{j}P(Y_{-l-1}=i,Y_{-l}=\iota ,Y_{-l+1}=j|X_{t}=x_{t},t\in T_{\geq -l})}
{\sum _{j}P(Y_{-l-1}=i,Y_{-l}=\kappa ,Y_{-l+1}=j|X_{t}=x_{t},t\in T_{\geq -l})} \\
 & = & \frac{\sum _{j}P(Y_{-l-1}=i,Y_{-l}=\iota ,Y_{-l+1}=j)\;
f(x_{t},t\in T_{\geq -l}|Y_{-l-1}=i,Y_{-l}=\iota ,Y_{-l+1}=j)}
{\sum _{j}P(Y_{-l-1}=i,Y_{-l}=\kappa ,Y_{-l+1}=j)\;
f(x_{t},t\in T_{\geq -l}| Y_{-l-1}=i,Y_{-l}=\kappa ,Y_{-l+1}=j)}\\
 & = & \frac{\sum _{j}P(Y_{-l-1}=i)\; p_{i\iota }\; p_{\iota j}\;
f(x_{t},t\in T_{\geq -l}|Y_{-l-1}=i,Y_{-l}=\iota ,Y_{-l+1}=j)}
{\sum _{j}P(Y_{-l-1}=i)\; p_{i\kappa }\; p_{\kappa j}\;
f(x_{t},t\in T_{\geq -l}| Y_{-l-1}=i,Y_{-l}=\kappa ,Y_{-l+1}=j)}\\
 & = & \frac{p_{i\iota }\; \sum _{j}p_{\iota j}\;
f(x_{t},t\in T_{\geq -l}|Y_{-l-1}=i,Y_{-l}=\iota ,Y_{-l+1}=j)}
{p_{i\kappa }\; \sum _{j}p_{\kappa j}\;
f(x_{t},t\in T_{\geq -l}| Y_{-l-1}=i,Y_{-l}=\kappa ,Y_{-l+1}=j)}\\
  & = & \frac{p_{i\iota }\; \sum _{j}p_{\iota j}\; f_{\iota }(x_{-l})\; f_{j}(x_{-l+1})\;
f(x_{t},t\in T_{\geq -l+2}| Y_{-l+1}=j)}
{p_{i\kappa }\; \sum _{j}p_{\kappa j}\; f_{\kappa }(x_{-l})\; f_{j}(x_{-l+1})\;
f(x_{t},t\in T_{\geq -l+2}| Y_{-l+1}=j)}\, \, \, \, \, (*)\\
& \leq  & \frac{p_{i\iota }\; f_{\iota }(x_{-l})}{p_{i\kappa }\; f_{\kappa }(x_{-l})}\;
\max _{j}\frac{p_{\iota j}\; f_{j}(x_{-l+1})\;
f(x_{t},t\in T_{\geq -l+2}| Y_{-l+1}=j)}
{p_{\kappa j}\; f_{j}(x_{-l+1})\;
f(x_{t},t\in T_{\geq -l+2}| Y_{-l+1}=j)}\\
 & = & \frac{p_{i\iota }\; f_{\iota }(x_{-l})}{p_{i\kappa }\; f_{\kappa }(x_{-l})}\;
\max _{j}\frac{p_{\iota j}}{p_{\kappa j}}\\
 & \leq  & \max _{\iota ,\kappa ,i,j}\frac{p_{i\iota }\; p_{\iota j}\; f_{\iota }(x_{-l})}{p_{i\kappa }\; p_{\kappa j}\; f_{\kappa }(x_{-l})}\, .\\
 &
\end{eqnarray*}

Note that the term  \( f_{j}(x_{-l+1}) \) in
line \( (*) \) only appears for  \( -l+1\in T \).
In the second part of the assertion we have \( -l\notin T \) hence the terms  \( f_{\iota }(x_{-l}) \)
and \( f_{\kappa }(x_{-l}) \) do not appear in line \( (*) \). Thus the dependence on
 \( x_{-l} \) disappears.

\end{proof}

As a consequence  we obtain:

\begin{corollary}\label{Korollar}
Consider the situation of Lemma \ref{lem:41}. Then

(i) for  \( -l\in T \)
\[
P(Y_{-l}=j|Y_{-l-1}=i,X_{t}=x_{t},t\in T )\geq \eta (x_{-l})>0\, ,\]

(ii) for  \( -l\notin T \)
\[
P(Y_{-l}=j|Y_{-l-1}=i,X_{t}=x_{t},t\in T )\geq \eta >0\,  .\]

\end{corollary}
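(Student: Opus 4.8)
The plan is to deduce this lower bound directly from the ratio estimate of Lemma~\ref{lem:41}, using only the fact that the conditional probabilities of $Y_{-l}$ over the finite state space $M$ sum to one.

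First I would fix $j\in M$ and abbreviate $q_\kappa = P(Y_{-l}=\kappa\mid Y_{-l-1}=i, X_t=x_t, t\in T)$ for $\kappa\in M$, so that $\sum_{\kappa\in M} q_\kappa = 1$ because $(q_\kappa)_{\kappa\in M}$ is a probability vector on $M$.

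In case (i), where $-l\in T$, I would apply Lemma~\ref{lem:41}(i) with $\kappa = j$ (and $\iota$ arbitrary) to obtain $q_\iota \le \alpha(x_{-l})\, q_j$ for every $\iota\in M$. Summing this over the $|M|-1$ indices $\iota\neq j$ and adding $q_j$ gives
\[
1 = q_j + \sum_{\iota\neq j} q_\iota \le q_j\bigl(1 + (|M|-1)\alpha(x_{-l})\bigr),
\]
and rearranging yields $q_j \ge (1+(|M|-1)\alpha(x_{-l}))^{-1} = \eta(x_{-l})$. Strict positivity is immediate from $\alpha(x_{-l})<\infty$ under Assumption~$(A)$.

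Case (ii), where $-l\notin T$, is identical word for word once the estimate of Lemma~\ref{lem:41}(i) is replaced by that of Lemma~\ref{lem:41}(ii) and $\alpha(x_{-l})$ by $\alpha$ throughout, giving $q_j \ge (1+(|M|-1)\alpha)^{-1} = \eta$. I do not expect any genuine obstacle here: the whole content is the passage from a pairwise ratio bound to an individual lower bound via the normalization constraint, and the constants $\eta(x)$ and $\eta$ were defined precisely so that this computation closes.
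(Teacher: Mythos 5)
Your proposal is correct and is essentially the paper's own argument: the paper likewise writes $1/q_j=\sum_k q_k/q_j\le 1+(|M|-1)\hat\alpha$ with $\hat\alpha=\alpha(x_{-l})$ or $\alpha$ according to whether $-l\in T$, which is your computation after dividing through by $q_j$. No substantive difference.
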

\begin{proof}
Set  \( \hat{\alpha }= \alpha (x_{-l})\) for $ -l\in T $ and \( \hat{\alpha }= \alpha\)  for $ -l\notin T$.
Lemma \ref{lem:41}  implies

\begin{eqnarray*}
 && \frac{1}{P(Y_{-l}=j|Y_{-l-1}=i,X_{t}=x_{t},t\in T )}\\
 & = & \sum _{k}\frac{P(Y_{-l}=k|Y_{-l-1}=i,X_{t}=x_{t},t\in T)}{P(Y_{-l}=j|Y_{-l-1}=i,X_{t}=x_{t}:t\in T)}\\
 & \leq  & 1+(m-1)\hat{\alpha }\, ,
\end{eqnarray*}
hence
\[
P(Y_{-l}=j|Y_{-l-1}=i,X_{t}=x_{t},t\in T)\geq (1+(m-1)\hat{\alpha })^{-1}\,.\]

\end{proof}

We now give the proof of Theorem \ref{MayMin}.

\begin{proof}
For  \( A=M \)  we have \( m_{l}^{+}-m_{l}^{-}=1-1=0 \). So assume \( A\neq M \).
We use induction in $l$ and shall apply Corollary \ref{Korollar}.

Let \( l=1 \):
Chose \( j\in A^{c} \).
\begin{eqnarray*}
m_{1}^{+}-m_{1}^{-} & = & \max _{i\in M}P(Y_{0}\in A|X_{-n}^{0}=x_{-n}^{0},Y_{-1}=i)-m_{1}^{-}\\
 & = & 1-\min _{i\in M}P(Y_{0}\notin A|X_{-n}^{0}=x_{-n}^{0},Y_{-1}=i)-m_{1}^{-}\\
 & \leq  & 1-\min _{i\in M}P(Y_{0}=j|X_{-n}^{0}=x_{-n}^{0},Y_{-1}=i)-m_{1}^{-}\\
 & \leq  & 1-\hat{\eta }_{0}-\hat{\eta }_{0} =
(1-2\hat{\eta }_{0})\, .
\end{eqnarray*}
The inductive step:
\\
Assume that the assertion is true for \( l\in \N  \). Let $j^+$ and $j^-$ be such that the maximum and minimum
respectively, are attained in these values, i.e. $j^+ = \arg m_{l}^{+},\, j^- = \arg m_{l}^{-}$.
Then
\begin{eqnarray*}
m_{l+1}^{+} & = & \max_{i\in M}P(Y_{0}\in A|X_{-n}^{0}=x_{-n}^{0},Y_{-l-1}=i)\\
 & = & \max_{i\in M}\{\sum _{j\in M}P(Y_{0}\in A|X_{-n}^{0}=x_{-n}^{0},Y_{-l}=j,Y_{-l-1}=i)\\
 &  & \, \, \, \, \, \; \times P(Y_{-l}=j|X_{-n}^{0}=x_{-n}^{0},Y_{-l-1}=i)\} \\
 & = & \max_{i\in M}\{\sum _{j\in M}P(Y_{0}\in A|X_{-n}^{0}=x_{-n}^{0},Y_{-l}=j)\\
 &  & \, \, \, \, \, \times P(Y_{-l}=j|X_{-n}^{0}=x_{-n}^{0},Y_{-l-1}=i)\}\\
 & = & \max_{i\in M}\{\sum _{j\neq j^{-}}P(Y_{0}\in A|X_{-n}^{0}=x_{-n}^{0},Y_{-l}=j)\\
 &  & \, \, \, \, \, \; \times P(Y_{-l}=j|X_{-n}^{0}=x_{-n}^{0},Y_{-l-1}=i)\\
 &  & \, \, \, \, \, +m_{l}^{-}\; P(Y_{-l}= j^{-} |X_{-n}^{0}=x_{-n}^{0},Y_{-l-1}=i)\} \\
 & \leq  & \max_{i\in M}\{\sum_{j\neq j^{-}} m_{l}^{+}\; P(Y_{-l}=j|X_{-n}^{0}=x_{-n}^{0},Y_{-l-1}=i)\\
 &  & \\
 &  & \, \, \, \, \, +m_{l}^{-}\; P(Y_{-l}= j^{-} |X_{-n}^{0}=x_{-n}^{0},Y_{-l-1}=i)\} \\
 & = & \max_{i\in M}\{m_{l}^{+}\; (1-P(Y_{-l}= j^{-}|X_{-n}^{0}=x_{-n}^{0},Y_{-l-1}=i)) \\
 &  & \, \, \, \, \, +m_{l}^{-}\; \underbrace{P(Y_{-l}= j^{-}|X_{-n}^{0}=x_{-n}^{0},Y_{-l-1}=i)}_{\geq \hat{\eta }_{-l}}\} \\
 & \leq  & \max_{i\in M}\{m_{l}^{+}(1-\hat{\eta }_{-l})+m_{l}^{-}\hat{\eta }_{-l}\}\\
 & = & m_{l}^{+}(1-\hat{\eta }_{-l})+m_{l}^{-}\hat{\eta }_{-l}\, ,
\end{eqnarray*}

Similarly

\begin{eqnarray*}
m_{l+1}^{-} & = & \min _{i\in M}P(Y_{0}\in A|X_{-n}^{0}=x_{-n}^{0},Y_{-l-1}=i)\\
 & = & \min _{i\in M}\{\sum _{j\in M}P(Y_{0}\in A|X_{-n}^{0}=x_{-n}^{0},Y_{-l}=j,Y_{-l-1}=i)\\
 &  & \, \, \, \, \, \; \times P(Y_{-l}=j|X_{-n}^{0}=x_{-n}^{0},Y_{-l-1}=i)\} \\
 & = & \min _{i\in M}\{\sum _{j\in M}P(Y_{0}\in A|X_{-n}^{0}=x_{-n}^{0},Y_{-l}=j)\\
 &  & \, \, \, \, \, \times P(Y_{-l}=j|X_{-n}^{0}=x_{-n}^{0},Y_{-l-1}=i)\} \\
 & = & \min _{i\in M}\{\sum_{j\neq j^{+}}P(Y_{0}\in A|X_{-n}^{0}=x_{-n}^{0},Y_{-l}=j)\\
 &  & \, \, \, \, \, \; \times P(Y_{-l}=j|X_{-n}^{0}=x_{-n}^{0},Y_{-l-1}=i)\\
 &  & \, \, \, \, \, +m_{l}^{+}\; P(Y_{-l}=j^{+}|X_{-n}^{0}=x_{-n}^{0},Y_{-l-1}=i)\}\\
 & \geq  & \min _{i\in M}\{\sum_{j\neq j^{+}}m_{l}^{-}
\; P(Y_{-l}=j|X_{-n}^{0}=x_{-n}^{0},Y_{-l-1}=i)\\
 &  & \, \, \, \, \, +m_{l}^{+}\; P(Y_{-l}= j^{+}|X_{-n}^{0}=x_{-n}^{0},Y_{-l-1}=i)\}\\
 & = & \min _{i\in M}\{m_{l}^{-}\; (1-P(Y_{-l}= j^{+}|X_{-n}^{0}=x_{-n}^{0},Y_{-l-1}=i))\\
 &  & \, \, \, \, \, +m_{l}^{+}\; \underbrace{P(Y_{-l}= j^{+}|X_{-n}^{0}=x_{-n}^{0},Y_{-l-1}=i)}_{\geq \hat{\eta }_{-l}}\} \\
 & \geq  & \min _{i\in M}\{m_{l}^{-}(1-\hat{\eta }_{-l})+m_{l}^{+}\hat{\eta }_{-l}\}\\
 & = & m_{l}^{-}(1-\hat{\eta }_{-l})+m_{l}^{+}\hat{\eta }_{-l}\, .
\end{eqnarray*}

This implies

\begin{eqnarray*}
 m_{l+1}^{+}-m_{l+1}^{-} & \leq  & (1-2\hat{\eta }_{-l})m_{l}^{+}-(1-2\hat{\eta }_{-l})m_{l}^{-}\\
 & \leq  & (1-2\hat{\eta }_{-l})\prod _{j=-l+1}^{0}(1-2\hat{\eta }_{j})\\
 & = & \prod _{j=-l}^{0}(1-2\hat{\eta }_{j})\, .\\
\end{eqnarray*}

\end{proof}

\bibliographystyle{plain}
\bibliography{IWHMM_rev1}

\begin{thebibliography}{10}

\bibitem{cappe2005inference}
Olivier Capp{\'e}, Eric Moulines, and Tobias Ryd{\'e}n.
\newblock {\em Inference in hidden Markov models}, volume~6.
\newblock Springer, 2005.

\bibitem{gyorfi1996probabilistic}
Luc Devroye, L{{\'a}}szl{{\'o}} Gy{{\"o}}rfi, and G{{\'a}}bor Lugosi.
\newblock {\em A probabilistic theory of pattern recognition}, volume~31 of
  {\em Applications of Mathematics (New York)}.
\newblock Springer-Verlag, New York, 1996.

\bibitem{DefenseAnalyses.CommunicationsResearchDivision1980}
Institute for Defense Analyses. Communications Research~Division and John~D
  Ferguson.
\newblock {\em Symposium on the Application of Hidden Markov Models to Text and
  Speech}.
\newblock Institute for Defense Analyses, Communications Research Division,
  1980.

\bibitem{gyorfi1989nonparametric}
L{\'a}szl{\'o} Gy{\"o}rfi, Wolfgang H{\"a}rdle, Pascal Sarda, and Philippe
  Vieu.
\newblock {\em Nonparametric curve estimation from time series}, volume~60.
\newblock Springer-Verlag Berlin, 1989.

\bibitem{MR1920390}
L{{\'a}}szl{{\'o}} Gy{{\"o}}rfi, Michael Kohler, Adam Krzy{\.z}ak, and Harro
  Walk.
\newblock {\em A distribution-free theory of nonparametric regression}.
\newblock Springer Series in Statistics. Springer-Verlag, New York, 2002.

\bibitem{MR1873337}
M.~Holst and A.~Irle.
\newblock Nearest neighbor classification with dependent training sequences.
\newblock {\em Ann. Statist.}, 29(5):1424--1442, 2001.

\bibitem{huang1990hidden}
Xuedong~D Huang, Yasuo Ariki, and Mervyn~A Jack.
\newblock {\em Hidden Markov models for speech recognition}, volume 2004.
\newblock Edinburgh university press Edinburgh, 1990.

\bibitem{Irle1997}
A~Irle.
\newblock On consistency in nonparametric estimation under mixing conditions.
\newblock {\em Journal of multivariate analysis}, 60(1):123--147, 1997.

\bibitem{MR1692202}
Iain~L. MacDonald and Walter Zucchini.
\newblock {\em Hidden {M}arkov and other models for discrete-valued time
  series}, volume~70 of {\em Monographs on Statistics and Applied Probability}.
\newblock Chapman \& Hall, London, 1997.

\bibitem{Meyn2012}
Sean~P Meyn and Richard~L Tweedie.
\newblock {\em Markov chains and stochastic stability}.
\newblock Springer Science \& Business Media, 2012.

\bibitem{MR2274382}
Daniil Ryabko.
\newblock Pattern recognition for conditionally independent data.
\newblock {\em J. Mach. Learn. Res.}, 7:645--664, 2006.

\end{thebibliography}

\end{document}